%------------------------------------------------------------------------------
% Beginning of journal.tex
%------------------------------------------------------------------------------
%
% AMS-LaTeX version 2 sample file for journals, based on amsart.cls.
%
%        ***     DO NOT USE THIS FILE AS A STARTER.      ***
%        ***  USE THE JOURNAL-SPECIFIC *.TEMPLATE FILE.  ***
%
% Replace amsart by the documentclass for the target journal, e.g., tran-l.
%
\documentclass{amsart}

%     If your article includes graphics, uncomment this command.
% \usepackage{graphicx}
\usepackage{xcolor}
\usepackage{amssymb}

\usepackage{graphicx,wrapfig}
\usepackage{caption}
\usepackage[nopar]{lipsum}

%---------- Algorithm ----------%
\usepackage[linesnumbered,ruled,vlined]{algorithm2e}
\SetKwInput{KwInput}{Input}                % Set the Input
\SetKwInput{KwOutput}{Output}              % set the Output

\newtheorem{theorem}{Theorem}%[section]
%[section]
\newtheorem{corollary}{Corollary}%[section]

\theoremstyle{definition}

\theoremstyle{remark}

\numberwithin{equation}{section}

\usepackage{eqnarray,amsmath}

% \setlength {\marginparwidth }{3.5cm} 
% \usepackage{todonotes}
% \usepackage{soul}

%    Absolute value notation
\newcommand{\N}{\mathbb{N}}
\newcommand{\R}{\mathbb{R}}
\newcommand{\Z}{\mathbb{Z}}

\newcommand{\abs}[1]{|#1|}
\newcommand\norm[1]{\left|#1\right|}

\renewcommand{\dot}[2]{\left\langle#1,#2\right\rangle}

%deep implicits notation---------

% \newcommand{\grad}[1]{\mbox{grad}\,#1}

% \newcommand{\hess}[1]{\nabla^2\,#1}

\usepackage{mathrsfs}
\usepackage[scr=boondoxo]{mathalfa}
\newcommand{\gt}[1]{\mathscr{#1}}

%    Blank box placeholder for figures (to avoid requiring any
%    particular graphics capabilities for printing this document).

\usepackage[numbers,sort,compress]{natbib}

\begin{document}

\title{Understanding Sinusoidal Neural Networks}

\author[T. Novello]{Tiago Novello \\ IMPA}
\address{IMPA - VISGRAF, Rio de Janeiro, Brazil}
\email{tiago.novello90@gmail.com}

%    General info
%\subjclass[2000]{Primary 54C40, 14E20; Secondary 46E25, 20C20}

\date{\today~\hfill \\ 
\indent \textit{E-mail address}: \texttt{tiago.novello90@gmail.com}}

\keywords{Sinusoidal Neural Networks, SIRENs, Fourier Series.}

%===================================================
\begin{abstract}
In this work, we investigate the structure and representation capacity of \textit{sinusoidal} MLPs --- multilayer perceptron networks that use \textit{sine} as the activation function.
These neural networks (known as \textit{neural fields}) have become fundamental in representing common signals in computer graphics, such as images, signed distance functions, and radiance fields. 
This success can be primarily attributed to two key properties of sinusoidal MLPs: \textit{smoothness} and \textit{compactness}.
These functions are smooth because they arise from the composition of affine maps with the sine function. 
This work provides theoretical results to justify the compactness property of sinusoidal MLPs and provides control mechanisms in the definition and training of these networks.

We propose to study a sinusoidal MLP by expanding it as a \textit{ harmonic}~sum.
First, we observe that its first layer can be seen as a harmonic dictionary, which we call the \textit{input $($sinusoidal$)$ neurons}.
Then, a hidden layer combines this dictionary using an affine map and modulates the outputs using the sine, this results in a special dictionary of \textit{sinusoidal neurons}.
We prove that each of these sinusoidal neurons expands as a harmonic sum \textit{producing} a large number of new \textit{frequencies} expressed as integer linear combinations of the input frequencies. 
Thus, each hidden neuron produces the same frequencies, and the corresponding \textit{amplitudes} are completely determined by the hidden affine map. 
We also provide an upper bound and a way of sorting these amplitudes that can control the resulting approximation, allowing us to truncate the corresponding series.
Finally, we present applications for training and initialization of sinusoidal MLPs. Additionally, we show that if the input neurons are periodic, then the entire network will be periodic with the same period. We relate these \textit{periodic networks} with the Fourier series representation. 
\end{abstract}
%===================================================
\maketitle

\section{Introduction}
\label{s-introduction}

\textit{Neural fields} is a research topic that involves representing a graphical object, such as an image or surface, as a neural network $f:\R^n\to \R^m$. 
This network maps each coordinate $x$ of the domain $\R^n$ (e.g., 2D or 3D points) to its corresponding value $f(x)$ in the codomain $\R^m$ (e.g., color or distance from a surface). \textit{Multilayer perceptron networks} (MLPs) are important examples.

The use of the sine activation function in MLPs has attracted the attention of the neural field community~\cite{parascandolo2016taming, sitzmann2020implicit, paz2022mrnet, paz2023mr, novello21diff, novello21neuralAnimation, benbarka2022seeing, yang2021geometry, mehta2022level, da2022neural, schardong2023neural}.
However, determining the parameters (such as width) of these MLPs remains an empirical task.
This work considers an alternative approach to studying this problem through a novel expansion formula of a sinusoidal neuron in terms of a sum of sines (Theorem~\ref{t-siren2sum}). 
This formula resembles a \textit{Fourier series} of the neuron.

In this work, we approach the task of expanding a \textit{sinusoidal} MLP $f:\R\to~\R$ with a single \textit{hidden layer} $\textbf{h}:\R^n\to \R^n$, with \textit{width} $n\in\N$, as a sum of sines.
Specifically, we consider $f$ to be expressed as:
\begin{align}\label{e-siren}
f(x)=\textbf{l}\circ \textbf{h} \circ \textbf{s}(x)+d,    
       \end{align}
with $\textbf{h}(x)=\sin(\textbf{a} x+b)$, where $\textbf{a}\in\R^{n\times n}$ is the \textit{hidden matrix}, and $b\in\R^{n}$ is the \textit{bias}. 
The first layer $\textbf{s}:\R\to \R^n$ is defined as $\textbf{s}(x)=\sin(\omega x+\varphi)$, with $\omega, \varphi \in\R^n$. 
Finally,  $\textbf{l}(x)=\dot{c}{x}+d$ is a linear layer with $c\in\R^n$ and $d\in\R$.

We justify the assumption that the network $f$ has $\mathbb{R}$ as its domain and codomain.
First, note that the analysis of a sinusoidal MLP $f\!:\!\mathbb{R}^k\!\to\!\mathbb{R}^m$ reduces to the analysis of the $m$ MLPs $f_i:\mathbb{R}^k\to \mathbb{R}$ corresponding to the coordinates of $f$.
Moreover, the first layer $\textbf{s}(x)$ of a sinusoidal MLP $f:\mathbb{R}^k\to \mathbb{R}$ is expressed as $\textbf{s}(x) = \sin\left(\sum_{i=1}^k \omega_i x_i+\varphi\right)$, where $\omega_i$ and $x$ are in $\mathbb{R}^k$.
Therefore, when restricting $f$ to the coordinate $x_1$ results in a sinusoidal MLP of the form $\mathbb{R}\to \mathbb{R}$ with its first layer defined as
$\sin\left(\omega_1 x_1 + \sum_{i=2}^k \omega_i x_i+\varphi\right)$,
with $\sum_{i=2}^k \omega_i x_i+\varphi$ being its first bias. The same applies to the other coordinates.
Hence, throughout this work, we assume that the sinusoidal MLPs have the form $\mathbb{R}\to \mathbb{R}$.

\vspace{0.2cm}

Previously, sinusoidal MLPs have been regarded as difficult to train~\cite{parascandolo2016taming}. Sitzmann et al.~\cite{sitzmann2020implicit} overcome this by defining a special initialization scheme that guarantees stability and convergence.
This motivated several works that proved empirically that these networks have a high capacity for representing fine details.
This success can be attributed to two key properties of sinusoidal MLPs: \textit{smoothness} and \textit{compactness}.
These functions are smooth because they arise from the composition of affine maps with the sine function. 
Specifically, the layers $\textbf{l}$, $\textbf{h}$, $\textbf{s}$ are smooth because they are affine maps composed with the sine function, thus, their composition is also smooth.
In this work, we provide theoretical results to justify the compactness property of sinusoidal MLPs and give control mechanisms in defining and training the network architecture.
For this, we propose to study a sinusoidal MLP by expanding it as a \textit{harmonic} sum.

We start with an interpretation of the first and second layers of a sinusoidal MLP $f$ in terms of harmonics.
The first layer $\textbf{s}$ can be interpreted as a dictionary of ({harmonics}) \textit{input sinusoidal neurons} $\textbf{s}_i(x)=\sin(\omega_ix+\varphi_i)$ as follows:
\begin{align}
\displaystyle
    \textbf{s}(x)=
    \left(
    \begin{array}{c}
        \sin(\omega_1x+\varphi_1)\\
         \vdots\\
         \sin(\omega_nx+\varphi_n)
    \end{array}
    \right)
\end{align}
For simplicity, we may omit the \textit{sinusoidal} term and refer to $\textbf{s}_i$ as \textit{input neurons}.
Thus, the first weight matrix $\omega$ gives the input \textit{frequencies} and the first bias $\varphi$ the \textit{phase shifts} of the input neurons $\textbf{s}_i$.
The following equation says that the bias $\varphi$ is responsible of including the cosine functions $\cos(\omega_i x)$ into the dictionary: 
$$\textbf{s}_i(x)=\sin(\omega_ix+\varphi_i)=\cos(\varphi_i)\sin(\omega_ix)+\sin(\varphi_i)\cos(\omega_ix).$$
For an example, consider $\varphi=(0,\ldots, 0, \frac{\pi}{2}, \ldots, \frac{\pi}{2})\in \R^{2n}$ with the first $n$ coordinates being zeros and the remaining being $\frac{\pi}{2}$. Now, defining the first weight matrix by $2\pi (\omega, \omega)$, results that $\textbf{s}(x)$ represents a \textit{Fourier mapping}~\cite{tancik2020fourier}:
\begin{align}
\displaystyle
    \textbf{s}(x)=
    \left(
    \begin{array}{c}
        \sin(2\pi\omega x)\\
         \cos(2\pi\omega x)
    \end{array}
    \right).
\end{align}
\citet{benbarka2022seeing} explored this fact and initialized the first matrix using integer frequencies ($\omega\subset \Z$) to relate $\textbf{s}(x)$ with Fourier series.
For example, assume that $\omega$ has distinct integer coordinates, i.e $\omega_i\in\Z$ and $\omega_i\neq \omega_j$~for~$i\neq~\!j$, thus $\textbf{s}(x)$ is a list of orthogonal functions generated by a finite subset of the Fourier basis: $\sin(0 x), \sin(2\pi i x), \cos(2\pi i x)$ with $i\in\N$.
However, only the first layer $\textbf{s}$ was considered in the analysis of~\cite{benbarka2022seeing}. In this work, we prove that the whole network $f$ can be expanded in harmonic sum similar to a Fourier series (see Theorem~\ref{t-network2sum}).

\vspace{0.1cm}
We now return to the generic case. The hidden layer $\textbf{h}$ of $f$ 
combines the input neurons $\textbf{s}_i$ with \textit{amplitudes} determined by the hidden matrix $\textbf{a}\in\R^{n\times n}$, resulting in the following list of (sinusoidal) \textit{hidden neurons} $\textbf{h}_i$:
\begin{align}\label{e-second_layer}
    \textbf{h}\big(\textbf{s}(x)\big)=
    \left(
    \begin{array}{c}
        \displaystyle\sin\left(\sum_{j=1}^n a_{1j}\sin(\omega_jx+\varphi_j)+b_1\right)\\
         \vdots\\
         \displaystyle\sin\left(\sum_{j=1}^n a_{nj}\sin(\omega_jx+\varphi_j)+b_n\right)
    \end{array}
    \right).
\end{align}
That is, $\textbf{h}\big(\textbf{s}(x)\big)$ is modulation of the harmonic sums $\sum_{j} a_{ij}\textbf{s}_j(x)+b_i$ by the sine function resulting in the list of \textit{sinusoidal neurons} $\textbf{h}_i(x)=\sin\left(\sum_{j} a_{ij}\textbf{s}_j(x)+b_i\right)$. The \textit{hidden biases} $b_i$ can be seen as the $0^{th}$ \textit{harmonics} of this sum.
Observe that we interpret each hidden neuron as a function $\textbf{h}_i:\R\to \R$.

\vspace{0.1cm}

In this work, we prove that each sinusoidal neuron $\textbf{h}_i$ expands as a harmonic sum with its frequencies completely determined by special (integer) linear combinations, $\sum_{i=1}^n k_i\omega_i$ with $k_i\in\Z$, of the input frequencies~$\omega_i$.
For this, we derive a formula (Theorem~\ref{t-siren2sum}) that expresses, in closed form, the amplitudes, frequencies, and phase shifts of this sum. 
In other words, this novel \textit{trigonometric identity} can be used for an analytical derivation of the Fourier spectra of the sinusoidal neuron $\textbf{h}_i$.
We explore this sinusoidal expansion in the following applications.
\begin{itemize}
\item We prove that the sinusoidal neurons $\textbf{h}_i$ produce a large number of new frequencies that are the same for each neuron (Sec~\ref{s-sinusoidal_neurons}). 
However, each $\textbf{h}_i$ defines different \textit{amplitudes} for the corresponding harmonics, and those are determined by the line $a_i$ of the hidden matrix $\textbf{a}$ (Sec~\ref{s-sinusoidalMLP}). 

\item We provide an upper bound and a way of sorting the amplitudes in the expansion of a sinusoidal neuron  (Secs~\ref{s-upperbound}~and~\ref{s-sorting}). Such mechanisms can be used to control the resulting approximation, allowing us to truncate the corresponding series.

\item We explore this Fourier-like interpretation of sinusoidal MLPs to analyze their behavior during training and provide adequate initialization schemes (Secs~\ref{s-training} and \ref{s-initialization}). 

\item  Finally, we show that our trigonometric formula implies that if the input neurons are periodic, then the entire neural network will be periodic with the same period (Sec~\ref{s-periodicity}). We relate these \textit{periodic networks} with the Fourier series representation. 
\end{itemize}

\section{Sinusoidal Neurons}
\label{s-sinusoidal_neurons}
In this section, we show that each sinusoidal neuron expands as a sum of harmonics with their \textit{frequencies} completely determined by the input frequencies $\omega$ and \textit{amplitudes} by the coefficients of the hidden matrix $\textbf{a}$.

Precisely, we define a \textit{sinusoidal neuron} as a function $h:\R\to \R$ expressed as 
$$h(x)=\sin\left(\sum_{i=1}^n a_i\sin(\omega_ix+\varphi_i)+ b\right)$$
where $\textbf{a}$, $\omega$, $\varphi$, $\textbf{b}$ are the amplitudes, frequencies, phases, and bias, respectively.
Note that, following the common machine learning neuron notation, $h$ receives a list of $n$ {input neurons} $\textbf{s}_i(x)=\sin(\omega_ix+\varphi_i)$ that are combined with the weights $a_i$ and activated by $\sin$. The number $n$ is the \textit{width} of the sinusoidal neuron $h$.

\vspace{0.1cm}

Before presenting the expansion of a sinusoidal neuron with width $n$, let us recall the (Fourier) expansion of a neuron with width $1$ and no bias~\cite[Page 361]{abramowitz1964handbook}:
\begin{align}\label{e-fourier-expansion-composition}
    \sin\big(a\sin(x)\big)\!=\!\!\sum_{k\in\Z \text{ odd}} \!\!\!J_k(a) \sin(kx),\,\,\text{with}\,\, J_k(a)\!=\!\!\int_0^\pi\!\!\!\cos\big(kt-x\sin(t)\big) dt.
\end{align}
The functions $J_k$ are the well-known \textit{Bessel functions} of the first kind.
To provide an expansion of the sinusoidal neuron $h$ we must generalize the formula in Equation~\ref{e-fourier-expansion-composition}. 
For this, we prove the following result using an inductive argument on the width $n$ of the sinusoidal neuron $h$.

\begin{theorem}
\label{t-siren2sum}
A sinusoidal neuron $h$ with width $n$ expands as a harmonic sum
\begin{align}\label{e-perceptron_approx}
h(x)= \sum_{\textbf{k}\in\Z^n}\alpha_\textbf{k}(a)\sin\Big(\dot{\textbf{k}}{\omega x +\varphi}+ b\Big) \quad \text{with}\quad \alpha_\textbf{k}(a)=\prod_{i=1}^n J_{k_i}(a_i).
\end{align}
\end{theorem}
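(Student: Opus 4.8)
The plan is to reduce the whole statement to the classical \emph{Jacobi--Anger expansion} $e^{iz\sin\theta}=\sum_{k\in\Z}J_k(z)e^{ik\theta}$, whose imaginary part is precisely Equation~\eqref{e-fourier-expansion-composition} and whose real part is the companion identity $\cos(z\sin\theta)=\sum_{k\in\Z}J_k(z)\cos(k\theta)$ (both found in \cite{abramowitz1964handbook}). Writing $\psi_j:=\omega_j x+\varphi_j$, the exponential of the argument of $h$ factors,
\begin{align*}
\exp\!\Big(i\big(\textstyle\sum_{j=1}^n a_j\sin\psi_j+b\big)\Big)
= e^{ib}\prod_{j=1}^{n} e^{\,i a_j\sin\psi_j}
= e^{ib}\prod_{j=1}^{n}\Big(\sum_{k_j\in\Z}J_{k_j}(a_j)\,e^{ik_j\psi_j}\Big),
\end{align*}
and expanding this finite product of series and collecting terms according to the multi-index $\textbf{k}=(k_1,\dots,k_n)\in\Z^n$ gives
\begin{align*}
\exp\!\Big(i\big(\textstyle\sum_{j}a_j\sin\psi_j+b\big)\Big)
= \sum_{\textbf{k}\in\Z^n}\Big(\prod_{j=1}^{n}J_{k_j}(a_j)\Big)e^{i(\dot{\textbf{k}}{\omega x+\varphi}+b)}
= \sum_{\textbf{k}\in\Z^n}\alpha_{\textbf{k}}(a)\,e^{i(\dot{\textbf{k}}{\omega x+\varphi}+b)}.
\end{align*}
Taking imaginary parts on both sides yields exactly the claimed harmonic sum~\eqref{e-perceptron_approx}.

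If one prefers to argue entirely in the real formalism (as announced in the paper), the same computation is organized as an induction on the width $n$, strengthened so that it simultaneously expands $\cos\big(\sum_j a_j\sin\psi_j+b\big)$. The inductive step peels off the last summand, $\sin\big(u+a_n\sin\psi_n\big)=\sin u\,\cos(a_n\sin\psi_n)+\cos u\,\sin(a_n\sin\psi_n)$ with $u=\sum_{j<n}a_j\sin\psi_j+b$, then applies Equation~\eqref{e-fourier-expansion-composition} and its cosine companion to the $\psi_n$-terms and the (strengthened) induction hypothesis to $u$, and finally recombines each product via $\sin\Phi\cos\Psi+\cos\Phi\sin\Psi=\sin(\Phi+\Psi)$. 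Writing primes for the first $n-1$ components, the phases add, $\dot{\textbf{k}'}{\omega' x+\varphi'}+b+k_n\psi_n=\dot{\textbf{k}}{\omega x+\varphi}+b$ with $\textbf{k}=(\textbf{k}',k_n)$, and the amplitudes multiply, $\alpha_{\textbf{k}'}(a)\,J_{k_n}(a_n)=\alpha_{\textbf{k}}(a)$, which is exactly the product formula $\alpha_{\textbf{k}}(a)=\prod_i J_{k_i}(a_i)$; the cosine case is identical using $\cos\Phi\cos\Psi-\sin\Phi\sin\Psi=\cos(\Phi+\Psi)$.

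The trigonometry above is routine; the one step that genuinely needs care --- and the one I expect to be the main obstacle --- is justifying the rearrangements: expanding the $n$-fold product of infinite series into a single sum over $\Z^n$, reordering that sum, and moving $\mathrm{Im}(\cdot)$ inside it. All of this follows once the series converge absolutely, which in turn follows from the super-exponential decay of Bessel functions: from the power series for $J_k$ one obtains a bound of the form $|J_k(z)|\le C(z)\,|z/2|^{|k|}/|k|!$, so that $\sum_{k\in\Z}|J_k(a_i)|<\infty$ for each $i$, and hence $\sum_{\textbf{k}\in\Z^n}|\alpha_{\textbf{k}}(a)|=\prod_{i=1}^n\sum_{k_i\in\Z}|J_{k_i}(a_i)|<\infty$. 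Absolute convergence legitimizes the Cauchy-product expansion of the finite product of series, and by the Weierstrass $M$-test the resulting harmonic sum converges absolutely and uniformly on all of $\R$, which also permits the termwise application of $\mathrm{Im}$. With this bookkeeping settled, the theorem follows.
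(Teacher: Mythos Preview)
Your proposal is correct. You in fact give two proofs, and the second one---induction on $n$, strengthened to cover both $\sin$ and $\cos$, peeling off the last summand and recombining via the addition formula---is precisely the paper's argument. The only cosmetic difference is that the paper works with the parity-split single-variable identities (odd $k$ for $\sin$, even $k$ for $\cos$) and then merges the two halves using the product-to-sum formula together with $J_{-k}=(-1)^kJ_k$, whereas you implicitly sum over all of $\Z$ from the start, which makes the recombination a one-line application of $\sin\Phi\cos\Psi+\cos\Phi\sin\Psi=\sin(\Phi+\Psi)$.

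Your first proof, via the complex Jacobi--Anger identity and a direct product expansion, is a genuinely different and shorter route: the multiplicativity of the exponential collapses the $n$-term problem into a product of $n$ one-term problems in a single step, and the factorization $\alpha_{\textbf{k}}(a)=\prod_i J_{k_i}(a_i)$ is then immediate rather than emerging inductively. What the paper's real-variable induction buys in return is that it stays entirely within trigonometry and produces the cosine companion~\eqref{e-perceptron_approx_cosine} as a by-product (which, as you note, is needed anyway to close the induction). Your final paragraph on absolute convergence---the bound $|J_k(a)|\le C(a)(|a|/2)^{|k|}/|k|!$ giving $\sum_{\textbf{k}}|\alpha_{\textbf{k}}(a)|<\infty$ and hence legitimizing the Cauchy product, the reordering over $\Z^n$, and the termwise $\mathrm{Im}$---supplies a point of rigor that the paper leaves implicit.
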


Before presenting the proof of Theorem~\ref{t-siren2sum} let's provide some additional details.
Observe that the \textit{frequencies} and \textit{phase-shifts} in the expansion given in Equation~\ref{e-perceptron_approx} are integer linear combinations of the input frequencies $\omega_i$: 
\begin{align}\label{e-new_frequencies}
    \beta_\textbf{k}(\omega)=\sum_{i=1}^{n} k_i\omega_{i}, \quad  \lambda_\textbf{k}(\varphi, b)=\sum_{i=1}^{n} k_i\varphi_{i}+b.
\end{align} 
As a consequence, we note that the activation of the sinusoidal neuron sum $\textbf{s}_i$ by the sine function is \textit{producing} a lot of frequencies $\beta_\textbf{k}(\omega)$ in terms of the input frequencies~$\omega$. 
More precisely, truncating the expansion, that is, summing over $\norm{\textbf{k}}_\infty\leq B\in \N$, implies that the neuron $h$ can learn $\frac{(2B+1)^n -1}{2}$ non-null frequencies.
To compute this number, we remove the case $\textbf{k}=0$ and count $\{\textbf{k},-\textbf{k}\}$ as a single case.
Also, note that the $0^{th}$ harmonic is present in the expansion of $h$ because $\beta_{\textbf{0}}(\omega)=0$, and it corresponds to $\prod_{i=1}^n J_{0}(a_i)\sin(b)$. 
This makes the last bias in a sinusoidal neural network probably not necessary.

A natural issue regarding such frequency factoring is the existence of a $B$ such that $\norm{\alpha_\textbf{k}(a)}$ is small for $\norm{\textbf{k}}_\infty\leq B$.
In Section~\ref{s-upperbound}, we provide an upper bound for $\alpha_\textbf{k}(a)$ that implies a rapid decrease of the $\textbf{k}$-amplitudes as $\textbf{k}$ increases.

\vspace{0.2cm}

{This frequency factoring explains why composing sinusoidal layers \textit{may} compact data information}. Specifically, from the above discussion, we deduced that the sinusoidal neuron $h$ receiving $n$ frequencies, parameterized by $3n+1$ coefficients, can be used to approximate a harmonic sum with $\frac{(2B+1)^n -1}{2}$ terms.

Another important observation is that the weights $\textbf{a}$ fully determine the amplitude $\alpha_\textbf{k}(\textbf{a})$ of each harmonic term $\sin\big( \beta_\textbf{k}(\omega) x+\lambda_\textbf{k}(\varphi, b) \big)$ in the expansion. Consequently, the input matrix $\omega$ determines the frequencies that can be represented by $h$, while the hidden matrix $\textbf{a}$ determines which frequencies will be used.

\vspace{0.2cm}
Back to the proof of Theorem~\ref{t-siren2sum}. It consists of proving the formula in Equation~\ref{e-perceptron_approx} as well as a similar one using the cosine as the activation function:
\begin{align}\label{e-perceptron_approx_cosine}
\cos\left(\sum_{i=1}^n a_i\sin(\omega_ix+\varphi_i)+ b\right)= \sum_{\textbf{k}\in\Z^n}\alpha_\textbf{k}(a)\cos\Big(\dot{\textbf{k}}{\omega x +\varphi}+ b\Big).
\end{align}

\begin{proof}[Proof of Theorem~\ref{t-siren2sum}]
The proof is by induction in $n$. For the base case $n=1$, we prove $\sin\left(a \sin(y)+b\right)=\sum_{k\in \Z} J_k(a) \sin(ky+b)$ with $y=\omega x+ \varphi$.
For this, we use the expansion in Equation \ref{e-fourier-expansion-composition} and its cosine analogous expansion $\cos\big(a\sin(y)\big)\!=\!\!\sum J_l(a) \sin(ly)$, here the sum is over the even numbers. Thus, applying the trigonometric identity $\sin(a+b)=\sin(a)\cos(b)+\cos(a)\sin(b)$ we obtain:
\begin{align*}
    \sin\big(a \sin(y)+b\big)&=\sin\big(a \sin(y)\big)\cos(b)+\cos\big(a \sin(y)\big)\sin(b)\\
    &=\sum_{k\in\Z \text{ odd}} \!\!\!J_k(a) \sin(ky)\cos(b) + \sum_{l\in\Z \text{ even}} \!\!\!J_l(a) \cos(ly)\sin(b)\\
    &=\sum_{k\in\Z \text{ odd}} \!\!\!J_k(a) \sin(ky+b) + \sum_{l\in\Z \text{ even}} \!\!\!J_l(a) \sin(ly+b)\\
    &=\sum_{k\in\Z} J_k(a) \sin(ky+b).
\end{align*}
In the third equality we combined the formula $\sin(a)\cos(b)=\frac{\sin(a+b)+\sin(a-b)}{2}$ and the fact that $J_{-k}(a)=(-1)^kJ_k(a)$ to rewrite the summations.
The proof of the formula using the cosine as activation function is similar.

Assume that the formulas hold for $m-1$, with $m>1$, we prove that Equation~\ref{e-perceptron_approx} holds for $m$ (the induction step). Again, we denote $y_i=\omega_i x+ \varphi_i$ for simplicity.

\begin{align*}
    \sin\left(\sum_{i=1}^{m} a_i\sin(y_i)+ b\right)&=\sin\left(\sum_{i=1}^{m-1} a_i\sin(y_i)+ b\right)\cos\big(a_m\sin(y_m)\big)\\
    &+\cos\left(\sum_{i=1}^{m-1} a_i\sin(y_i)+ b\right)\sin\big(a_m\sin(y_m)\big)
    \\
    &=\sum_{\textbf{k}\in\Z^{m-1},\, l\in\Z \text{ even}}\alpha_\textbf{k}(a)J_l(a_m)\sin\Big(\dot{\textbf{k}}{y}+ b\Big)\cos(ly_m)\\
    &+\sum_{\textbf{k}\in\Z^{m-1},\, l\in\Z \text{ odd}}\alpha_\textbf{k}(a)J_l(a_m)\cos\Big(\dot{\textbf{k}}{y}+ b\Big)\sin(ly_m)\\
    &=\sum_{\textbf{k}\in\Z^m}\alpha_\textbf{k}(a)\sin\Big(\dot{\textbf{k}}{y}+ b\Big)
\end{align*}
We use the induction hypothesis in the second equality and an argument similar to the one used in the base case to rewrite the harmonic sum. Again, the cosine activation function case is analogous.
\end{proof}

\citet{yuce2022structured} presented a similar formula for MLPs activated by polynomial functions.
While it is evident that the sine function can be approximated by a polynomial using Taylor series, our formula requires no approximations.
In addition to providing a simple proof, we also derive the analytical expressions for the amplitudes. These expressions enable us to compute upper bounds for the new frequencies (Theorem~\ref{t-upper_bound}).
Furthermore, we apply our formula to show that sinusoidal MLPs are a very close representation of the Fourier series. Finally, we use it to prove a periodicity theorem concerning sinusoidal MLPs (Theorem~\ref{t-periodic}).

\vspace{0.2cm}

We notice that the expansion of the sinusoidal neuron $h$ in Equation~\ref{e-perceptron_approx} resembles the \textit{amplitude-phase form} of the Fourier series.
On the other hand, the \textit{sine-cosine} and \textit{exponential forms} follow directly as corollaries of Theorem~\ref{t-siren2sum}.

\begin{corollary}\label{c-fourier_expansion}
    A sinusoidal neuron $h(x)=\sin\Big(\sum_{i=1}^n a_i\sin(\omega_ix+\varphi_i)+ b\Big)$ expands as sum of sines and cosines: 
    \begin{align}\label{e-sine-cosine-form}
        h(x)=\sum_{\textbf{k}\in\Z^n}A_\textbf{k}\cos\Big(\dot{\textbf{k}}{\omega} x\Big) + B_\textbf{k}\sin \Big(\dot{\textbf{k}}{\omega} x\Big)
    \end{align}
    with $A_\textbf{k}=\alpha_\textbf{k}(a)\sin\Big(\dot{\textbf{k}}{\varphi} +b\Big)$ and $B_\textbf{k}=\alpha_\textbf{k}(a)\cos\Big(\dot{\textbf{k}}{\varphi} +b\Big)$.
    
\noindent We can also rewrite this sine-cosine sum as a sum of complex exponentials:    
        \begin{align}\label{e-exponential-form}
        h(x)=\sum_{\textbf{k}\in\Z^n}c_\textbf{k}\text{e}^{i\dot{\textbf{k}}{\omega}x},  \text{ with } c_\textbf{k} = i\alpha_\textbf{k}(a)\text{e}^{i\dot{\textbf{k}}{\varphi}}\left(\frac{\text{e}^{ib}-(-1)^{\sum k_i} \text{e}^{-ib}}{2}\right).
    \end{align}
\end{corollary}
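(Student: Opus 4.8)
The plan is to derive Corollary~\ref{c-fourier_expansion} directly from Theorem~\ref{t-siren2sum} by expanding the sine in each harmonic term of Equation~\ref{e-perceptron_approx} and then rearranging, so that no new analytic input is needed. First I would take the amplitude-phase expansion
$$h(x) = \sum_{\textbf{k}\in\Z^n}\alpha_\textbf{k}(a)\sin\Big(\dot{\textbf{k}}{\omega x +\varphi}+ b\Big)$$
and, inside each summand, split the argument as $\dot{\textbf{k}}{\omega}x + \big(\dot{\textbf{k}}{\varphi}+b\big)$, using linearity of the inner product in the first slot. Applying the angle-addition identity $\sin(u+v)=\sin u\cos v+\cos u\sin v$ with $u=\dot{\textbf{k}}{\omega}x$ and $v=\dot{\textbf{k}}{\varphi}+b$ gives, term by term,
$$\alpha_\textbf{k}(a)\Big[\sin\big(\dot{\textbf{k}}{\omega}x\big)\cos\big(\dot{\textbf{k}}{\varphi}+b\big) + \cos\big(\dot{\textbf{k}}{\omega}x\big)\sin\big(\dot{\textbf{k}}{\varphi}+b\big)\Big].$$
Collecting the $\cos(\dot{\textbf{k}}{\omega}x)$ and $\sin(\dot{\textbf{k}}{\omega}x)$ pieces yields exactly the claimed coefficients $A_\textbf{k}=\alpha_\textbf{k}(a)\sin(\dot{\textbf{k}}{\varphi}+b)$ and $B_\textbf{k}=\alpha_\textbf{k}(a)\cos(\dot{\textbf{k}}{\varphi}+b)$, establishing the sine-cosine form~\eqref{e-sine-cosine-form}.

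For the exponential form, I would substitute $\cos\theta=\tfrac{1}{2}(\mathrm{e}^{i\theta}+\mathrm{e}^{-i\theta})$ and $\sin\theta=\tfrac{1}{2i}(\mathrm{e}^{i\theta}-\mathrm{e}^{-i\theta})$ into Equation~\ref{e-sine-cosine-form} with $\theta = \dot{\textbf{k}}{\omega}x$, then regroup the resulting sum by the exponent $\dot{\textbf{k}}{\omega}x$. The term with $\mathrm{e}^{i\dot{\textbf{k}}{\omega}x}$ picks up $\tfrac12 A_\textbf{k} + \tfrac{1}{2i}B_\textbf{k}$, while the $\mathrm{e}^{-i\dot{\textbf{k}}{\omega}x}$ contribution from index $\textbf{k}$ should be reindexed to $-\textbf{k}$; here I would use $\alpha_{-\textbf{k}}(a) = \prod_i J_{-k_i}(a_i) = \prod_i (-1)^{k_i}J_{k_i}(a_i) = (-1)^{\sum k_i}\alpha_\textbf{k}(a)$, together with the parity of sine and cosine, to combine both contributions into a single coefficient $c_\textbf{k}$ of $\mathrm{e}^{i\dot{\textbf{k}}{\omega}x}$. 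A short computation of $\tfrac12 A_\textbf{k}+\tfrac{1}{2i}B_\textbf{k}$ plus the reindexed term, rewriting $A_\textbf{k},B_\textbf{k}$ back in terms of $\alpha_\textbf{k}(a)$ and factoring $\mathrm{e}^{i\dot{\textbf{k}}{\varphi}}$, should reproduce the stated $c_\textbf{k} = i\alpha_\textbf{k}(a)\mathrm{e}^{i\dot{\textbf{k}}{\varphi}}\big(\tfrac{\mathrm{e}^{ib}-(-1)^{\sum k_i}\mathrm{e}^{-ib}}{2}\big)$; I would double-check the factor of $i$ and the sign by testing the scalar case $n=1$, $\varphi=0$.

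The only genuine subtlety — and the step I would be most careful about — is the rearrangement of the doubly-indexed sums: passing from $\sum_\textbf{k}$ of $\sin/\cos$ terms to a single sum over $\textbf{k}\in\Z^n$ of exponentials requires reindexing $\textbf{k}\mapsto -\textbf{k}$ in half the terms, which is only legitimate if the series converges absolutely (so that reordering is harmless). This is guaranteed because $\sum_{\textbf{k}\in\Z^n}|\alpha_\textbf{k}(a)| = \prod_{i=1}^n\big(\sum_{k\in\Z}|J_k(a_i)|\big) < \infty$, since for each fixed real $a_i$ the Bessel coefficients decay super-exponentially in $|k|$ (a fact also underlying the upper bound of Section~\ref{s-upperbound}); I would state this absolute convergence explicitly as the justification for all the term-by-term manipulations. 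Apart from that, the proof is a purely mechanical application of standard trigonometric and Euler identities to the expansion already furnished by Theorem~\ref{t-siren2sum}.
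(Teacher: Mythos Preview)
Your approach is correct and is exactly what the paper has in mind: the paper does not spell out a proof of Corollary~\ref{c-fourier_expansion} but simply states that ``the \emph{sine-cosine} and \emph{exponential forms} follow directly as corollaries of Theorem~\ref{t-siren2sum}'', so the angle-addition identity followed by Euler's formulas is the intended argument. Your added remark on absolute convergence (via $\sum_{\textbf{k}}|\alpha_\textbf{k}(a)|=\prod_i\sum_k|J_k(a_i)|<\infty$) is a welcome justification that the paper leaves implicit.

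One comment on your planned sanity check: it is worth carrying out. If you run the computation exactly as you describe, the coefficient of $\mathrm{e}^{i\dot{\textbf{k}}{\omega}x}$ comes out as
\[
c_\textbf{k}=\frac{1}{2i}\,\alpha_\textbf{k}(a)\,\mathrm{e}^{i\dot{\textbf{k}}{\varphi}}\Big(\mathrm{e}^{ib}-(-1)^{\sum k_i}\mathrm{e}^{-ib}\Big),
\]
i.e.\ with prefactor $\tfrac{1}{2i}=-\tfrac{i}{2}$ rather than the $+i/2$ printed in Equation~\eqref{e-exponential-form}; the scalar test case $n=1$, $\varphi=0$, $b=0$ confirms this (one should recover $c_k=\tfrac{1}{2i}J_k(a)(1-(-1)^k)$ from $\sin(a\sin x)=\sum_k J_k(a)\sin(kx)$). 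So your instinct to ``double-check the factor of $i$'' is well placed: the method is right, and the discrepancy is a sign slip in the stated formula rather than in your argument.
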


For simplicity, we have used the notation $A_\textbf{k}$ instead of $A_\textbf{k}(a, b, \varphi)$; the same applies to $B_\textbf{k}$ and $c_\textbf{k}$.
Observe that the sine-cosine and exponential forms only resemble the Fourier expansion of the neuron $h$, as the frequencies $\dot{\textbf{k}}{\omega}$ could not be integer multiples of $2\pi$. However, we can initialize the coordinates of $\omega$ as integer multiples of $2\pi$, which could be used to rearrange the above sum as the Fourier series of $h$.
In Section~\ref{s-periodicity}, we provide further details about such initialization.

\subsection{Upper bound for the amplitudes}
\label{s-upperbound}

Here we show that the expansion of the sinusoidal neuron $h$ can be truncated by a (small) integer $B>0$:
$$h(x)\approx \sum_{\norm{\textbf{k}}_{\infty}\leq B}\alpha_\textbf{k}(a)\sin\Big(\dot{\textbf{k}}{\omega x +\varphi}+ b\Big).$$
Where $\norm{\textbf{k}}_{\infty}=\max\{|k_i|\}$.
To this end, we study how the amplitudes $\alpha_\textbf{k}(a)$ behave as $B$ grows. 
This is presented in the following result which gives an upper bound for the amplitudes $\alpha_\textbf{k}(a)$ in terms of $\textbf{k}$ and ${a}$.

\begin{theorem}\label{t-upper_bound}
The amplitude $\alpha_\textbf{k}(a)$ associated with the frequency $\dot{\textbf{k}}{\omega}$ in the harmonic expansion of a sinusoidal neuron is bounded as follows:
\begin{align}\label{e-upper-bound-freq}
    \abs{\alpha_\textbf{k}(a)}<
    \prod_{i=1}^n\frac{\left(\frac{\abs{a_i}}{2}\right)^{\norm{k_i}}}{\abs{k_i}!}.
\end{align}
\end{theorem}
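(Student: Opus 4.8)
The plan is to exploit the product structure $\alpha_\textbf{k}(a)=\prod_{i=1}^n J_{k_i}(a_i)$ established in Theorem~\ref{t-siren2sum} and reduce the whole statement to a one–dimensional estimate on the Bessel function, namely
\[
\abs{J_k(a)}\le \frac{\left(\abs{a}/2\right)^{\abs{k}}}{\abs{k}!},\qquad k\in\Z,\ a\in\R,
\]
with strict inequality whenever $a\neq 0$. Since $\abs{\alpha_\textbf{k}(a)}=\prod_{i=1}^n\abs{J_{k_i}(a_i)}$, multiplying these $n$ inequalities immediately gives Equation~\ref{e-upper-bound-freq}, and the inequality is strict in the generic (non-degenerate) case in which the hidden weights $a_i$ do not vanish, since then every factor is strict. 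So the real content is the scalar bound.

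First I would remove negative orders: the identity $J_{-k}(a)=(-1)^kJ_k(a)$ already used in the proof of Theorem~\ref{t-siren2sum} gives $\abs{J_{-k}(a)}=\abs{J_k(a)}$, so it is enough to treat $k\ge 0$. For $k\ge 0$ I would invoke Poisson's integral representation of the Bessel function \cite{abramowitz1964handbook}, which after the substitution $t=\cos\theta$ reads
\[
J_k(a)=\frac{(a/2)^k}{\sqrt{\pi}\,\Gamma\!\left(k+\tfrac12\right)}\int_{-1}^{1}(1-t^2)^{k-\frac12}\cos(at)\,dt .
\]
Estimating $\abs{\cos(at)}\le 1$ and evaluating the remaining Beta integral $\int_{-1}^{1}(1-t^2)^{k-1/2}\,dt=\sqrt{\pi}\,\Gamma\!\left(k+\tfrac12\right)/k!$ makes the constants cancel and leaves exactly $\abs{J_k(a)}\le\left(\abs{a}/2\right)^k/k!$. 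Strictness follows because, for $a\neq 0$, $\abs{\cos(at)}<1$ off a set of measure zero, so the triangle inequality for that integral is strict.

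A variant that avoids the Gamma–function bookkeeping is to work from the power series $J_k(a)=\sum_{m\ge 0}\frac{(-1)^m}{m!\,(m+k)!}\left(a/2\right)^{2m+k}$ directly: factoring out the $m=0$ term writes $J_k(a)=\frac{(a/2)^k}{k!}\cdot{}_0F_1(;k+1;-a^2/4)$, and one only needs $\abs{{}_0F_1(;k+1;-x)}\le 1$ for $x\ge 0$. I expect the main obstacle to be precisely this uniform-in-$a$ control: the terms of the alternating series decrease monotonically only for $x\le k+1$, so a naive term-by-term bound fails for large $\abs{a}$, and some genuinely non-elementary input (Poisson's formula, or equivalently a known monotonicity/positivity property of ${}_0F_1$) is required. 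The degenerate cases — $\textbf{k}=\textbf{0}$, or some $a_i=0$ with $k_i\neq 0$, where both sides vanish — should be flagged as the points where the bound is only non-strict.
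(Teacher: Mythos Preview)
Your proposal is correct and follows the same route as the paper: factor $\alpha_\textbf{k}(a)=\prod_i J_{k_i}(a_i)$, invoke the scalar bound $\abs{J_k(a)}<(\abs{a}/2)^{\abs{k}}/\abs{k}!$, and reduce negative orders via $\abs{J_{-k}(a)}=\abs{J_k(a)}$. The only difference is that the paper cites this scalar inequality from \cite{paris1984inequality} as a black box, whereas you prove it from Poisson's integral representation; this makes your argument self-contained and, as a bonus, also covers the case $k_i=0$ that the cited inequality (stated only for positive order) leaves implicit.
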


\pagebreak
\begin{proof}
Theorem~\ref{t-siren2sum} says that
$\alpha_\textbf{k}(a)=\prod_{i=1}^{n}J_{k_i}(a_i).$
To estimate an upper bound for this number, we use the following inequality~\cite{paris1984inequality}, which gives an upper bound for the Bessel functions $J_i(a)$.
\begin{align}\label{e-upper_bound_bessel_functions}
    \abs{J_i(a)}<\frac{\left(\frac{\abs{a}}{2}\right)^i}{i!}\,, \quad i>0,\quad a>0.
\end{align}
Observe that this inequality also holds for $a<0$ since $\abs{J_i(-a)}=\abs{J_i(a)}$.
Therefore, replacing Inequality~\ref{e-upper_bound_bessel_functions} in $\prod_{i=1}^{n}J_{k_i}(a_i)$ and using $\abs{J_{-i}(a)}=\abs{J_i(a)}$ results in the desired inequality~\ref{e-upper-bound-freq}.
\end{proof}

In particular, assuming $|a_i|\leq 1$ for $i=1,\ldots, n$, the upper bound provided by Theorem~\ref{t-upper_bound} implies that the amplitude $\alpha_\textbf{k}(a)$ associated with the harmonics $\sin\Big(\dot{\textbf{k}}{\omega x +\varphi}+ b\Big)$ is bounded as follows:
\begin{align}\label{e-upper_bound_bessel_functions_1}
\text{If} \quad \norm{a}_\infty\leq 1 \quad \text{then} \quad \abs{\alpha_\textbf{k}(a)}<\frac{1}{ \abs{k_1}!\cdots \abs{k_n}! \cdot 2^{\abs{k_1}+\cdots +\abs{k_n}}}.
\end{align}
This means that the frequencies $\dot{\textbf{k}}{\omega}$ produced by the sinusoidal neuron $h$ are bounded by a number that decreases rapidly as $\norm{\textbf{k}}_\infty$ increases.

In practice, the most successful initialization for the hidden weights $a$ considers values within the interval $\left(-\sqrt{\frac{6}{n}}, \sqrt{\frac{6}{n}}\right)$, as suggested in~\cite{sitzmann2020implicit}. Therefore, the upper bound provided in Inequality~\ref{e-upper_bound_bessel_functions_1} is applicable.

\vspace{0.2cm}

\subsection{Sorting the amplitudes}
\label{s-sorting}

In this section, we present a kind of sorting behavior of the amplitudes $\alpha_\textbf{k}(a)$ of the frequencies $\dot{\textbf{k}}{\omega}$ in terms of the integer vectors $\textbf{k}=(k_1,\ldots, k_n)$. 
However, before approaching the general case, let us consider $n=1$.
Specifically, we consider the case presented in
Equation~\ref{e-fourier-expansion-composition}:
$$\sin\big(a\sin(\omega x)\big)\!=\!\!\sum_{k\in\Z \text{ odd}} \!\!\!J_k(a) \sin(k\omega x)$$
Here, we are assuming $a\in\R$. We note that the absolute values of the Bessel functions $\norm{J_k(a)}$ exhibit a sorting pattern when $a$ is within the interval $\left(-\frac{\pi}{2}, \frac{\pi}{2}\right)$:
\begin{align}\label{e-sorting_of_bessel_functions}
    \text{If} \quad \abs{a}<\frac{\pi}{2} \text{ and } a\neq 0 \quad \text{then} \quad  \abs{J_1(a)}>\abs{J_2(a)}>\abs{J_3(a)}>\cdots
\end{align}
This behavior is shown in Figure~\ref{f-bessel_functions}. 
\begin{figure}[ht]
    \centering
    \includegraphics[width=0.9\columnwidth]{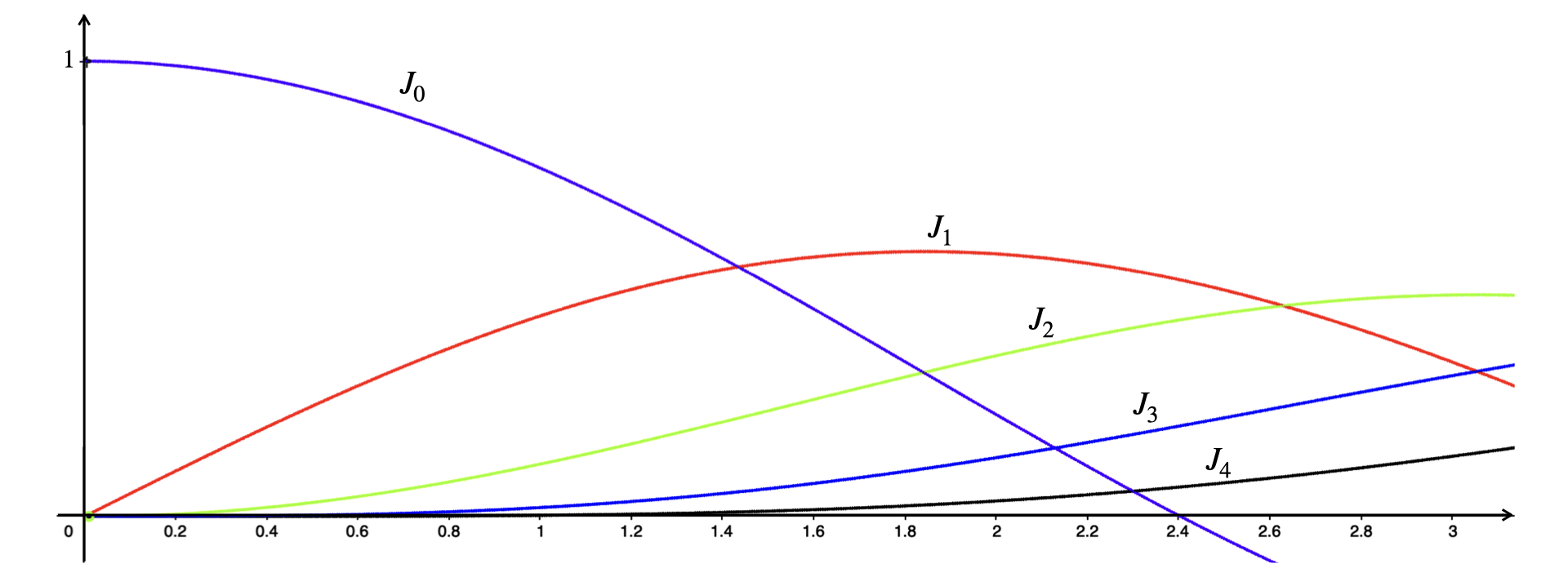}
    \vspace{-0.5cm}
    \caption{Bessel functions.}
    \label{f-bessel_functions}
\end{figure}

\noindent We also observe that $\abs{J_0(a)}>\abs{J_1(a)}$ for values of $a$ within an interval around $[0,1.4]$. However, we begin by proving the case outlined in Equation~\ref{e-sorting_of_bessel_functions}. We note that it is sufficient to show that $J_k$ is positive when $0< a\leq\frac{\pi}{2}$ and that $\frac{J_{k+1}(a)}{J_k(a)}<1$ for $k>0$.
This is because for $-\frac{\pi}{2}< a\leq 0$, we have $\abs{J_k(-a)}=\abs{J_k(a)}$.

The above facts are consequences of the following two inequalities presented~in~\cite{ifantis1990inequalities}:
\begin{align*}
    \frac{J_{k+1}(a)}{J_k(a)}&>\frac{a}{2k+2}>0,\quad  k\geq 0 \\
    \frac{J_{k+1}(a)}{J_k(a)}&<\frac{a}{2k+1}<1, \quad  k> 0.
\end{align*}

Equation~\ref{e-sorting_of_bessel_functions} implies that when $\abs{a}\leq\frac{\pi}{2}$ and $a\neq 0$, the amplitudes $J_k(a)$ of the expansion of $\sin\big(a\sin(x)\big)$ (Equation~\ref{e-fourier-expansion-composition}) are sorted in decreasing order by the index $k$. 
Additionally, since $\abs{J_{-k}(a)}=\abs{J_k(a)}$ we obtain a stronger result:
\begin{align}\label{e-sorting_of_bessel_functions_1}
    \text{Let } k,l\in \Z \text{ such that } \abs{k}<\abs{l} \text{ then }  \abs{J_k(a)}>\abs{J_l(a)}.
\end{align}

To include the case $k=0$ we must reduce the interval containing $a$. In Figure~\ref{f-bessel_functions} we can see that the intersection between the graphs of $J_0$ and $J_1$ happens before~$\frac{\pi}{2}$. 
We can extend the result of Equation~\ref{e-sorting_of_bessel_functions_1} to include the case $k=0$ if we consider $\abs{a}\leq 1$.
That is, $J_0(a)>J_1(a)$ for $\abs{a}<1$.
This is a consequence of the following recurrence relation given in \cite[Page 361]{abramowitz1964handbook}:
$$\frac{2k}{a}J_k(a)=J_{k-1}(a)+J_{k+1}(a).$$

As shown in Figure~\ref{f-bessel_functions}, the inequality can be extended to values near $\abs{a}\!<\!1.4$.
However, determining the exact location involves solving $J_0(x)-J_1(x)=0$, for which, to our knowledge, there is no closed-form solution.

\vspace{0.2cm}

We now return to the general case $n\ge 1$. For this, we present a generalization of the result in Equation~\ref{e-sorting_of_bessel_functions_1}.

\begin{theorem}\label{t-sorting}
Let $a\in\R^n$ such that $\abs{a_i}\leq 1$ and $a_i\neq 0$. Then, for every pair of distinct integer vectors $\textbf{k}, \textbf{l}\in \Z^n$ satisfying $\abs{k_i}\leq \abs{l_i}$, we have $\abs{\alpha_\textbf{k}(a)}>\abs{\alpha_\textbf{l}(a)}$.
\end{theorem}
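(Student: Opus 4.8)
The plan is to reduce the statement to the one-dimensional inequalities already collected in this subsection and then multiply coordinate by coordinate. By Theorem~\ref{t-siren2sum} we have $\abs{\alpha_\textbf{k}(a)}=\prod_{i=1}^{n}\abs{J_{k_i}(a_i)}$, so it suffices to prove the scalar statement: for $0<\abs{t}\le 1$ and integers $j,m$ with $\abs{j}\le\abs{m}$ one has $\abs{J_j(t)}\ge\abs{J_m(t)}>0$, with the first of these strict whenever $\abs{j}<\abs{m}$. The multivariable theorem then follows by taking products.

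For the scalar statement I would first use the parity identities $J_{-k}(t)=(-1)^kJ_k(t)$ and $J_k(-t)=(-1)^kJ_k(t)$, which give $\abs{J_k(t)}=\abs{J_{\abs{k}}(\abs{t})}$ and thus reduce everything to the range $0\le j\le m$, $0<t\le 1$. There the bound $J_{k+1}(t)/J_k(t)>t/(2k+2)>0$ of~\cite{ifantis1990inequalities}, together with $J_0(t)>0$ (since $t\le 1$ lies below the first positive zero of $J_0$), shows inductively that $J_k(t)>0$ for every $k\ge 0$; hence all the quantities involved are genuine positive numbers. The companion bound $J_{k+1}(t)/J_k(t)<t/(2k+1)<1$ for $k\ge 1$ then yields $J_j(t)>J_{j+1}(t)>\cdots>J_m(t)$ whenever $m>j\ge 1$ (this is Equation~\ref{e-sorting_of_bessel_functions}), and for the remaining case $j=0<m$ the recurrence $\tfrac{2}{t}J_1(t)=J_0(t)+J_2(t)$ gives $J_0(t)=\tfrac{2}{t}J_1(t)-J_2(t)\ge 2J_1(t)-J_2(t)=J_1(t)+\big(J_1(t)-J_2(t)\big)>J_1(t)$ for $0<t\le1$, which chained with the previous inequalities gives $J_0(t)>J_m(t)$. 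This proves the scalar statement.

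The multivariable conclusion is then immediate: for each $i$, from $\abs{k_i}\le\abs{l_i}$ and $0<\abs{a_i}\le 1$ we get $\abs{J_{k_i}(a_i)}\ge\abs{J_{l_i}(a_i)}>0$, so taking products yields $\abs{\alpha_\textbf{k}(a)}\ge\abs{\alpha_\textbf{l}(a)}$; and if some index $i_0$ satisfies $\abs{k_{i_0}}<\abs{l_{i_0}}$, then that factor is strictly larger while $\prod_{i\ne i_0}\abs{J_{k_i}(a_i)}>0$, so the product inequality is strict.

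The only genuine subtlety — and the point I would flag explicitly — is the origin of the strictness. Distinctness of $\textbf{k}$ and $\textbf{l}$ together with $\abs{k_i}\le\abs{l_i}$ for all $i$ does not by itself produce an index with $\abs{k_{i_0}}<\abs{l_{i_0}}$: one may have $\abs{k_i}=\abs{l_i}$ for every $i$ yet $\textbf{l}\ne\textbf{k}$ (for instance $\textbf{l}=-\textbf{k}\ne\textbf{0}$), in which case $\abs{\alpha_\textbf{k}(a)}=\abs{\alpha_\textbf{l}(a)}$ and no strict inequality can hold. Hence the hypothesis must be read as ``$\abs{k_i}\le\abs{l_i}$ for all $i$, with strict inequality for at least one $i$'' — equivalently, the nonnegative vectors $(\abs{k_1},\dots,\abs{k_n})$ and $(\abs{l_1},\dots,\abs{l_n})$ are distinct — and under that reading the argument above is complete.
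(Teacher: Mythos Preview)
Your approach is essentially the paper's: factor $\abs{\alpha_\textbf{k}(a)}=\prod_i\abs{J_{k_i}(a_i)}$ via Theorem~\ref{t-siren2sum}, invoke the scalar monotonicity $\abs{J_j(t)}\ge\abs{J_m(t)}$ for $\abs{j}\le\abs{m}$ and $0<\abs{t}\le 1$ (the paper's Equation~\ref{e-sorting_of_bessel_functions_1} together with the $k=0$ extension), and multiply. You supply more detail than the paper does --- in particular the positivity argument and the explicit handling of the $j=0$ case via the recurrence --- but the skeleton is identical.

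Where you go beyond the paper is in the final paragraph, and you are right to flag it. The paper's own proof asserts that ``$\textbf{k}\neq\textbf{l}$ and $\abs{k_i}\le\abs{l_i}$'' forces some index with $\abs{k_j}<\abs{l_j}$; as you observe, this is false (e.g.\ $\textbf{l}=-\textbf{k}\neq\textbf{0}$), and in that situation $\abs{\alpha_\textbf{k}(a)}=\abs{\alpha_\textbf{l}(a)}$, so the strict conclusion cannot hold as stated. Your proposed reading --- that the vectors $(\abs{k_1},\dots,\abs{k_n})$ and $(\abs{l_1},\dots,\abs{l_n})$ be distinct --- is exactly what is needed, and under it your argument is complete. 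So your proof is correct and in fact repairs a gap the paper does not acknowledge.
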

\begin{proof}
By Theorem~\ref{t-siren2sum}, $\alpha_\textbf{k}(a)=\prod_{i=1}^{n}J_{k_i}(a_i)$. Since $\abs{a_i}\!\leq\! 1$, $a_i\!\neq\! 0$, and $\abs{k_i}\!\leq\!~\abs{l_i}$, Equation~\ref{e-sorting_of_bessel_functions_1} implies $\abs{J_{k_i}(a_i)}\geq\abs{J_{l_i}(a_i)}$.
Using this, we obtain the inequality:
\begin{align}
\abs{\alpha_\textbf{k}(a)}=\prod_{i=1}^{n}\abs{J_{k_i}(a_i)}>\prod_{i=1}^{n}\abs{J_{l_i}(a_i)}=\abs{\alpha_\textbf{l}(a)}.
\end{align}
The strict inequality is due to the fact that there is at least one $j\in \{1,\ldots, n\}$ such that $\abs{k_j}< \abs{l_j}$, which is a consequence of $\textbf{k}\neq \textbf{l}$ and $\abs{k_i}\leq \abs{l_i}$.
\end{proof}

In practice, we can apply Theorem~\ref{t-sorting} when $\abs{a_i}\leq 1.4$, which can be estimated numerically. Therefore, by keeping the hidden weights $a$ of the sinusoidal neuron $h$ bounded by $1.4$, we can effectively sort the amplitudes $\alpha_\textbf{k}(a)$ associated with the (new) frequencies $\dot{\textbf{k}}{\omega}$ based on the integer vectors $\textbf{k}$.

Moreover, Theorem~\ref{t-upper_bound} says that increasing $\abs{\textbf{k}}_\infty$ results in $\abs{J_\textbf{k}(a)}$ going to zero. 
Theorem~\ref{t-sorting} states that these amplitudes are also arranged in decreasing order. Thus, we can truncate the harmonic expansion of the neuron $h$ by summing up to index vectors $\textbf{k}$ satisfying $\abs{\textbf{k}}_\infty<B$, where $B$ serves as an upper bound. In this case, the least important frequencies $\dot{\textbf{k}}{\omega}$, for which $\norm{\textbf{k}}_{\infty}>B$, are discarded, and their corresponding amplitudes $\abs{J_\textbf{k}(a)}$ are small.

%===================================================
\section{Understanding sinusoidal networks with one hidden layer}
\label{s-sinusoidalMLP}
In this section, we extend Theorem~\ref{t-siren2sum} to a sinusoidal network with one hidden layer $f(x)=\textbf{l}\circ \textbf{h} \circ \textbf{s}(x)+d$; as defined in
Equation~\ref{e-siren}.

In Section~\ref{s-introduction}, we observed that the hidden sinusoidal layer $\textbf{h}$ receives a list of $n$ harmonics $\textbf{s}_j(x)=\sin(\omega_jx+\varphi_j)$ and outputs a list of $n$ sinusoidal neurons $\textbf{h}_i(x)=\sin\left(\sum_{j} a_{ij}\textbf{s}_j(x)+b_i\right)$, see Equation~\ref{e-second_layer}. Then, the linear layer $\textbf{l}$ combines these neurons resulting in the following expression:
\begin{align}
    f(x)=\sum_{i=1}^n c_i\sin\left(\sum_{j=1}^n a_{ij}\sin(\omega_ix+\varphi_i)+ b_j\right)+d
\end{align}
where $\textbf{a}=(a_1, \ldots, a_n)$, with $a_i,b\in\R^n$ and $d\in\R$.

Observe that applying Corollary~\ref{c-fourier_expansion} to each neuron $\textbf{h}_i$ we obtain the following sine-cosine expansion:
$$\textbf{h}_i(x)= \sum_{\textbf{k}\in\Z^n}A_\textbf{k}\cos\Big(\dot{\textbf{k}}{\omega} x\Big) + B_\textbf{k}\sin \Big(\dot{\textbf{k}}{\omega} x\Big).$$
We are going to use this formula to expand the whole network $f$ in a similar manner.  
We choose this expansion form because only the amplitudes $A_\textbf{k}$ and $B_\textbf{k}$ depend on the hidden weights $a_i$ and $b_i$. 
This allows us to rewrite (in the linear layer) the amplitudes associated with each frequency $\dot{\textbf{k}}{\omega}$ as a linear combination of the corresponding amplitudes of each neuron. 

\begin{theorem}\label{t-network2sum}
A sinusoidal network $f(x)=\textbf{l}\circ \textbf{h} \circ \textbf{s}(x)+d$ with one hidden layer and width $n$ expands as sum of sines and cosines:
\begin{align}\label{e-network_approx}
f(x)= \sum_{\textbf{k}\in\Z^n}\dot{c}{A_{\textbf{k}}}\cos\Big(\dot{\textbf{k}}{\omega} x\Big)+ \dot{c}{B_{\textbf{k}}}\sin\Big(\dot{\textbf{k}}{\omega} x\Big)+d,
\end{align}
with $A_{\textbf{k}}\!=\!\big(A_{\textbf{k}}(a_1, b_1),\ldots, A_{\textbf{k}}(a_n, b_n)\big)$ and $B_{\textbf{k}}\!=\!\big(B_{\textbf{k}}(a_1, b_1),\ldots, B_{\textbf{k}}(a_n, b_n)\big)$.
\end{theorem}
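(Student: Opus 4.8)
The plan is to combine Corollary~\ref{c-fourier_expansion} with the linearity of the final layer $\textbf{l}$, so the theorem follows essentially by collecting terms frequency by frequency. First I would write $f(x) = \sum_{i=1}^n c_i \textbf{h}_i(x) + d$, which is just the definition of $\textbf{l}\circ\textbf{h}\circ\textbf{s}$ after unwinding the composition. Then I would apply Corollary~\ref{c-fourier_expansion} to each sinusoidal neuron $\textbf{h}_i$ separately: since $\textbf{h}_i$ has hidden weights $a_i$ (the $i$-th row of $\textbf{a}$) and hidden bias $b_i$, we get
\begin{align*}
\textbf{h}_i(x)=\sum_{\textbf{k}\in\Z^n} A_\textbf{k}(a_i,b_i)\cos\big(\dot{\textbf{k}}{\omega}x\big)+B_\textbf{k}(a_i,b_i)\sin\big(\dot{\textbf{k}}{\omega}x\big),
\end{align*}
where $A_\textbf{k}(a_i,b_i)=\alpha_\textbf{k}(a_i)\sin(\dot{\textbf{k}}{\varphi}+b_i)$ and $B_\textbf{k}(a_i,b_i)=\alpha_\textbf{k}(a_i)\cos(\dot{\textbf{k}}{\varphi}+b_i)$, exactly the quantities named in the statement.

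Next I would substitute these expansions into $f(x)=\sum_i c_i \textbf{h}_i(x)+d$ and interchange the finite sum over $i$ with the sum over $\textbf{k}\in\Z^n$. Grouping the coefficient of $\cos(\dot{\textbf{k}}{\omega}x)$ gives $\sum_{i=1}^n c_i A_\textbf{k}(a_i,b_i)=\dot{c}{A_\textbf{k}}$ with $A_\textbf{k}=(A_\textbf{k}(a_1,b_1),\dots,A_\textbf{k}(a_n,b_n))$, and likewise for the sine terms with $B_\textbf{k}$. This yields precisely Equation~\ref{e-network_approx}. The role of choosing the sine-cosine form rather than the amplitude-phase form of Theorem~\ref{t-siren2sum} is exactly that the frequencies $\dot{\textbf{k}}{\omega}$ and the trigonometric arguments $\cos(\dot{\textbf{k}}{\omega}x)$, $\sin(\dot{\textbf{k}}{\omega}x)$ are shared across all neurons $\textbf{h}_i$ — they do not depend on $i$ — so the per-neuron data enters only through the amplitude vectors, making the term-collection step clean.

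The only genuine subtlety, and the step I would be most careful about, is the interchange of the two summations: the outer sum over $i$ is finite so no convergence issue arises there, but one should note that the series over $\textbf{k}$ for each $\textbf{h}_i$ converges (absolutely, by the rapid decay of $\alpha_\textbf{k}(a_i)=\prod_j J_{k_j}((a_i)_j)$ established via Theorem~\ref{t-upper_bound}), which justifies rearranging the finite linear combination of these series into a single series indexed by $\textbf{k}$. I would state this briefly rather than belabor it, since the decay bound is already in hand from Section~\ref{s-upperbound}. Everything else is bookkeeping: matching the notation $A_\textbf{k}=(A_\textbf{k}(a_1,b_1),\ldots,A_\textbf{k}(a_n,b_n))$ and $B_\textbf{k}=(B_\textbf{k}(a_1,b_1),\ldots,B_\textbf{k}(a_n,b_n))$ from Corollary~\ref{c-fourier_expansion}, and recognizing $\sum_i c_i(\cdot)_i$ as the inner product $\dot{c}{\cdot}$.
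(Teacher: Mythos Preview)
Your proposal is correct and matches the paper's proof essentially line for line: expand each $\textbf{h}_i$ via Corollary~\ref{c-fourier_expansion}, swap the finite sum over $i$ with the series over $\textbf{k}$, and collect $\sum_i c_i A_\textbf{k}(a_i,b_i)$ and $\sum_i c_i B_\textbf{k}(a_i,b_i)$ as the inner products $\dot{c}{A_\textbf{k}}$ and $\dot{c}{B_\textbf{k}}$. The only addition you make beyond the paper is the remark on absolute convergence justifying the interchange, which the paper leaves implicit.
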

\begin{proof}
The proof consists of applying Corollary~\ref{c-fourier_expansion} to each sinusoidal neuron $\textbf{h}_i$ of $f$, which are the outputs of the hidden layer $\textbf{h}$.
\begin{align*}
    f(x)&=\sum_{i=1}^n c_i \underbrace{\sin\left(\sum_{j=1}^n a_{ij}\sin(\omega_jx+\varphi_j)+ b_i\right)}_{\textbf{h}_i}+d\\
        &=\sum_{i=1}^n c_i\left(\sum_{\textbf{k}\in\Z^n}A_\textbf{k}(a_i,b_i)\cos\Big(\dot{\textbf{k}}{\omega} x\Big) + B_\textbf{k}(a_i,b_i)\sin\Big(\dot{\textbf{k}}{\omega} x\Big)\right)+d\\
        &=\sum_{\textbf{k}\in\Z^n}\sum_{i=1}^n c_i A_{\textbf{k}}(a_i, b_i)\cos\Big(\dot{\textbf{k}}{\omega} x\Big)+ \sum_{i=1}^n c_i B_{\textbf{k}}(a_i, b_i)\sin\Big(\dot{\textbf{k}}{\omega} x\Big)+d\\
        &=\sum_{\textbf{k}\in\Z^n}\dot{c}{A_{\textbf{k}}(\textbf{a}, b)}\cos\Big(\dot{\textbf{k}}{\omega} x\Big)+ \dot{c}{B_{\textbf{k}}(a, b)}\sin\Big(\dot{\textbf{k}}{\omega} x\Big)+d.
\end{align*}
In the fourth equality we are defining $A_{\textbf{k}}(\textbf{a},{b})$ as vector $\big(A_{\textbf{k}}(a_1, b_1),\ldots, A_{\textbf{k}}(a_n, b_n)\big)$; the same for $B_{\textbf{k}}$.
\end{proof}

A key observation in the proof of Theorem~\ref{t-network2sum} is that the harmonics given by the expansion of each sinusoidal neuron $\textbf{h}_i(x)=\sin\left(\sum_{j} a_{ij}\textbf{s}_i(x)+b_i\right)$ are the same. 
In~other words, the expansion of each neuron $\textbf{h}_i$ is not producing any new frequency. Thus, if a given frequency is not represented in the expansion of $f$ (Equation~\ref{e-network_approx}), increasing the number of hidden neurons will not solve this problem.
This is a consequence of Theorem~\ref{t-siren2sum} which states that those harmonics depend only on the weights of the first layer.

\vspace{0.2cm}
Clearly, we can derive an exponential expansion form for the network~$f$: 
\begin{align*}
     f(x)=\sum_{\textbf{k}\in\Z^n}\dot{c}{c_\textbf{k}(\textbf{a},b)}\text{e}^{i\dot{\textbf{k}}{\omega x+\varphi}} &= \sum_{\textbf{k}\in\Z^n}\underbrace{\dot{c}{c_\textbf{k}(\textbf{a},b)}\text{e}^{i\dot{\textbf{k}}{\varphi}}}_{c_\textbf{k}(\textbf{a},b,c, \varphi)}\text{e}^{i\dot{\textbf{k}}{\omega} x}\\
     &= \sum_{\textbf{k}\in\Z^n}{c_\textbf{k}(\textbf{a},b,c, \varphi)}\text{e}^{i\dot{\textbf{k}}{\omega} x}
     .
\end{align*}
Observe that the weights $\textbf{a},b,c, \varphi$ determine the amplitudes of each frequency $\dot{\textbf{k}}{\omega}$ in the expansion.

\section{Applications}

\subsection{Training sinusoidal networks}
\label{s-training}

Let $f$ be a sinusoidal network $f$ with a single hidden layer, to train $f$ it is common to define a loss function and use the \textit{gradient descent algorithm} to approximate a minimum. \citet{parascandolo2016taming}
reported that we have to be careful in the parameter initialization of $f$ because the training can lead to an undesired local minimum.
\citet{sitzmann2020implicit} overcomes this by providing an initialization scheme that preserves the distributions of activations through the network layers.
It consists of choosing the weights of each hidden layer in $\left(-\sqrt{\frac{6}{n}}, \sqrt{\frac{6}{n}}\right)$, where $n$ is the width of $f$.
On the other hand, Theorem~\ref{t-network2sum} states that $f$ can be expressed as
$$f(x)= \sum_{\textbf{k}\in\Z^n}\dot{c}{A_{\textbf{k}}}\cos\Big(\dot{\textbf{k}}{\omega} x\Big)+ \dot{c}{B_{\textbf{k}}}\sin\Big(\dot{\textbf{k}}{\omega} x\Big)+d.$$
Then, using \textit{Cauchy–Schwarz} inequality, Theorem~\ref{t-upper_bound}, and  $\abs{a_{ij}}, \abs{c_j}\!\leq\! \sqrt{\frac{6}{n}}$, results~in:
\begin{align}\label{$e$-siren-inequality}
    \abs{\dot{c}{A_{\textbf{k}}}}, \abs{\dot{c}{B_{\textbf{k}}}}<
    \sqrt{6n}\frac{\left(\frac{3}{2n}\right)^{\frac{\abs{k_1}+\dots+\abs{k_n}}{2}}}{\abs{k_1}!\dots \abs{k_n}!}.
\end{align}

Equation~\ref{$e$-siren-inequality} gives an upper bound for the amplitude associated with the frequency $\dot{\textbf{k}}{\omega}$. Thus, if $n>1$ and $\norm{\textbf{k}}$ is large, the inequality implies that the amplitude is small.
For an example, consider the case $n=32$, that is, $f$ has $32$ neurons at each layer. We use Equation~\ref{$e$-siren-inequality} to compute the upper bounds for the amplitudes associated with the frequencies $\dot{\textbf{k}}{\omega}$ in the expansion of $f$.
\begin{figure}[ht]
    % \centering
    \begin{center}
    \includegraphics[width=0.7\textwidth]{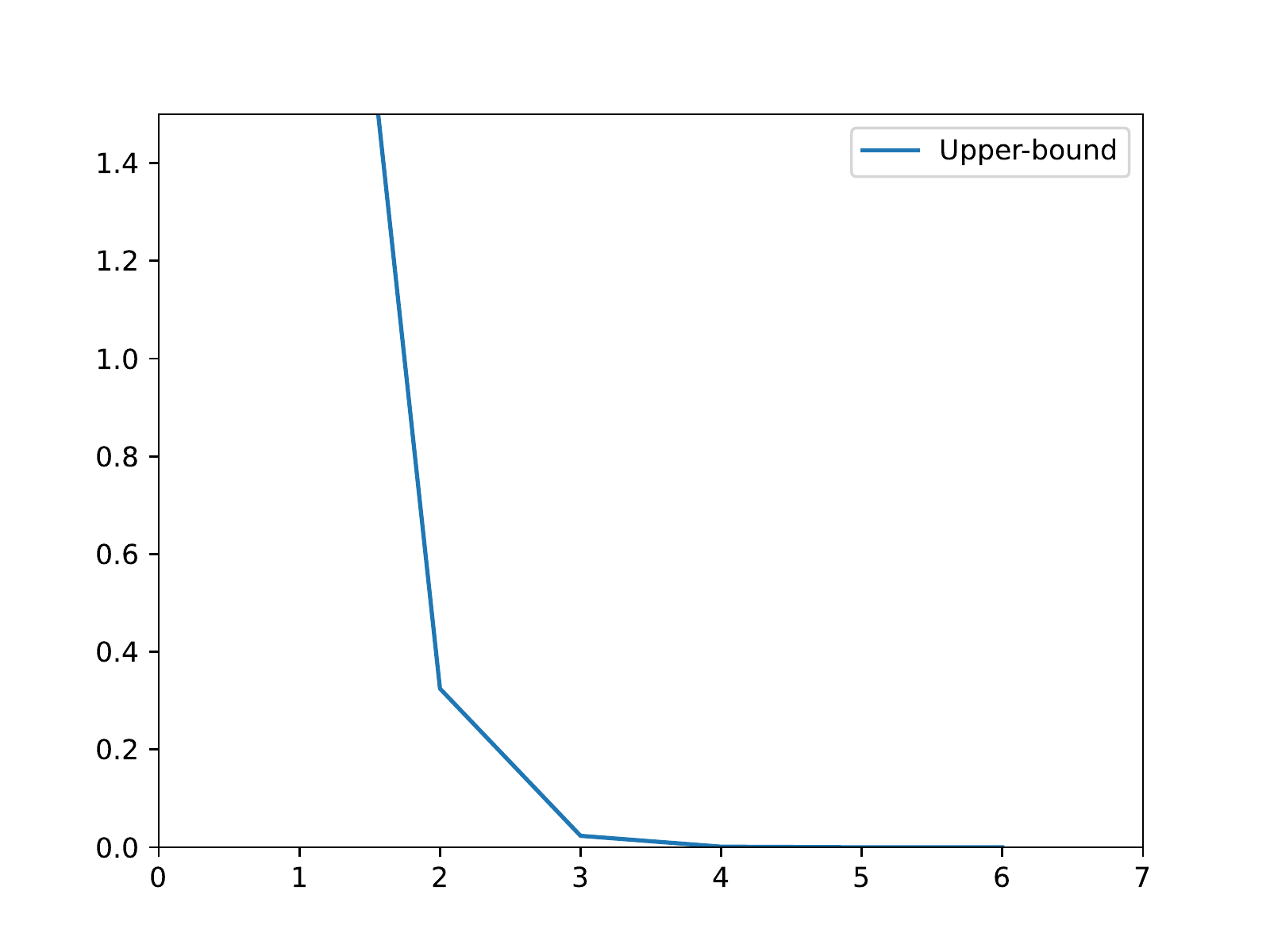}
    \caption{Upper bounds for the amplitudes associated to the frequencies $\dot{\textbf{k}}{\omega}$ in the case $n=32$. We consider $\textbf{k}=(i, 0,\ldots, 0)$ for $i=0,\ldots,7$ which coincides with the $x$-axis in the graph. In the case $i=7$ we obtained an upper-bound of $\approx 2e-06$}
    \label{f-upper-bound}
    \end{center}
\end{figure}

Figure~\ref{f-upper-bound} shows the upper bounds for the amplitudes associated with the frequencies $\dot{\textbf{k}}{\omega}$ with $\textbf{k}=(i, 0,\ldots, 0)$ for $i=0,\ldots,7$. 
Note that they decrease rapidly as we increase $i$.
Furthermore, it is evident that increasing the number of null entries in $\textbf{k}$ leads to even lower upper bounds.
Moreover, when we increase $n$, the amplitudes decrease even further due to the term $\frac{3}{2n}$ in Inequality~\ref{$e$-siren-inequality}.
This suggests that during training, the sinusoidal network will prioritize the lower frequencies.
Consequently, this provides a justification for the \textit{spectral bias phenomenon}~\cite{rahaman2019spectral}.

\subsection{Network initialization}
\label{s-initialization}
Let $g$ be a harmonic sum with frequencies $\tau_1,\ldots, \tau_K$. To~approximate $g$ by a sinusoidal network $f$ with one hidden layer we have to find its coefficients: 
$$f(x)=\sum_{i=1}^n c_i\sin\left(\sum_{j=1}^n a_{ij}\sin(\omega_ix+\varphi_i)+ b_j\right)+d.$$
First, we need to define its \textit{width} $n$, then, we have to initialize the coefficients to start the training using some variant of the \textit{gradient descent algorithm}. In the examples below we employed the ADAM \textit{algorithm}~\cite{kingma2014adam}.

\subsubsection{Network width}

In this section, we give a lower bound $M$ for the network width $n$. Observe that $K$ gives us an upper bound.  

Using Theorem~\ref{t-network2sum} we obtain a sinusoidal expansion of $f$ with frequencies $\dot{\textbf{k}}{\omega}$. On the other hand, Theorem~\ref{t-upper_bound} says that the amplitudes of such frequencies decrease rapidly as $\norm{\textbf{k}}_{\infty}$ grows. Therefore, we can truncate the expansion considering $\norm{\textbf{k}}_{\infty}\leq B$ for some small integer $B>0$.
As observed in Section~\ref{s-sinusoidal_neurons} the resulting truncated harmonic sum can represent up to $\frac{(2B+1)^n-1}{2}$.
Since we need to represent $K$ frequencies, we can assume that:
\begin{align}\label{e-width_formula}
    K=\frac{(2B+1)^n-1}{2}\quad \to \quad  n=\frac{\ln(2K+1)}{\ln(2B+1)}.
\end{align}
Equation~\ref{e-width_formula} gives a lower bound for the network width. For example, consider a signal $\gt{f}(x)=\sum_{k=1}^{12} c_k\sin\big(\phi_k x\big)$ with domain in $[-1,1]$ consisting of a sum of $12$ different sines with frequencies $\phi_k=2k+1$ and amplitudes $c_k$. 
Thus, assuming $B=2$, Equation~\ref{e-width_formula} says that we would need at least $n=2$ to define our network~$f$.
With such a width our network $f$ can be parameterized by only $13$ coefficients while $\gt{f}$ needs $24$ parameters.
Thus, $f$ compresses the expression of $\gt{f}$.

To evaluate the representation capacity of $f$, we examine three instances by manipulating the parameters of $\gt{f}$. Initially, we define $c_k$ as a decreasing sequence that ranges from $c_1=1$ to $c_{12}=0.005$.
Figure~\ref{f-testing_width_formula} (left) shows the result of training a network $f$ (shown in orange) with a width of $2$ to approximate the target signal $\gt{f}$ (shown in blue). As expected, by employing suitable initialization ($\omega_1$ and $\omega_2$ initialized with values in $[-5,5]$), we achieved an approximation with an error of approximately $\norm{f-\gt{f}}_2\approx 5e-6$.
\begin{figure}[ht]
    % \centering
    \begin{center}
    \vspace{-0.2cm}
    \includegraphics[width=0.328\textwidth]{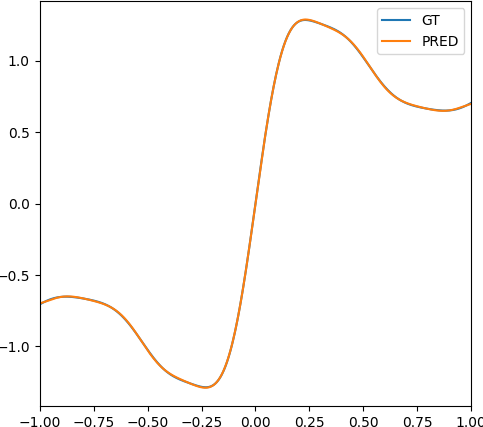}
    \includegraphics[width=0.328\textwidth]{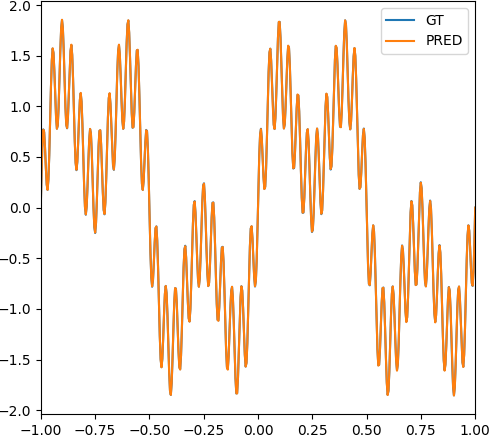}
    \includegraphics[width=0.325\textwidth]{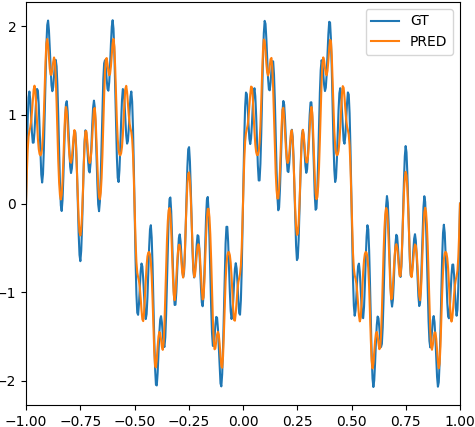}
    \vspace{-0.2cm}
    \caption{Graphs of the neural network $f$ with width $2$ trained to approximate signals consisting of the sum of $12$ sines with different frequencies and amplitudes.}
    \label{f-testing_width_formula}
    \end{center}
\end{figure}

In the second example, we employed frequencies $\phi_k=(2k+1)2\pi$, leading to the function $\gt{f}$ having two periods within the interval $[-1,1]$. Additionally, for the amplitudes, we selected higher amplitudes for the frequencies $\varphi_{11}$ and $\varphi_{12}$. To achieve this, we initialized $\omega_1=2\pi$ and $\omega_{2}=46\pi$, which remained \textit{frozen} during training. This yielded an approximation with an error of $\norm{f-\gt{f}}_2\approx 5e-5$ (Fig~\ref{f-testing_width_formula}~(middle)).
For the final example, we further increased the amplitudes of $\phi_5$ and $\phi_6$, while initializing $\omega_1=2\pi$ and $\omega_{2}=22\pi$. However, this resulted in an approximation with an error of approximately $\norm{f-\gt{f}}2\approx 0.07$ (Fig~\ref{f-testing_width_formula} (right)). We also attempted using $\omega{2}=46\pi$, but obtained a similar outcome. We overcome this issue by considering three neurons.

% TODO:
% . We initialize the hidden $2\times 2$ matrix with weights in...

% . Can we initialize the network using the above formulas?

\subsubsection{Frequency initialization}
To find the input frequencies $\omega_i$ we use Equation~\ref{e-new_frequencies} which provides the frequencies $\beta_\textbf{k}(\omega)=\sum_{i=1}^{n} k_i\omega_{i}$ with $\norm{\textbf{k}}_{\infty}\leq B$, of the expansion of $f$ given by Theorem~\ref{e-network_approx}.
Let $\textbf{k}^1, \ldots, \textbf{k}^K$ be an enumeration of the integer vectors $\textbf{k}$ satisfying $\norm{\textbf{k}}_{\infty}\leq m$. 
We~use~Theorem~\ref{t-sorting} to sort these vectors with respect to the amplitudes of the frequencies $\tau_i$ in the ground truth signal $\gt{f}$. Therefore, a solution $\omega$ of the following problem is a candidate to be the set of input~frequencies.
$$\left\{
\begin{array}{cc}
     \tau_1&= k^1_1\omega_1+\dots+k^1_n\omega_n  \\
     \vdots&\\
     \tau_K&= k^K_1\omega_1+\dots+k^K_n\omega_n  
\end{array}\right.
\longleftrightarrow \left[
\begin{array}{c}
     \tau_1\\
     \vdots\\
     \tau_K  
\end{array}\right]
=
\left[k^i_j\right]
\left[
\begin{array}{c}
     \omega_1\\
     \vdots\\
     \omega_n  
\end{array}\right]
$$
As $K>n$, we can use the \textit{least-squares} method to find a solution approximation.
The biases (phase shifts) $\varphi_i$ can be calculated in a similar way using Equation~\ref{e-new_frequencies}.

\subsection{On the periodicity of sinusoidal networks}
\label{s-periodicity}

A \textit{periodic function} $\gt{f}:\R\to \R$ is a function satisfying $\gt{f}(x) \!=\! \gt{f}(x + P)$ where $P$ is the \textit{period}.
For an example, consider $h(x)\!=\!a\sin(\omega x+  \varphi)$ where $\omega\!=\!l\frac{2\pi}{P}$, with $l\in\Z$, is the \textit{frequency}, $a$ is the \textit{amplitude}, and $\varphi$ is the \textit{phase shift}.
Thus, we can force the first layer of a sinusoidal network to be periodic.

Specifically, let $f:\R\to \R$ be a sinusoidal network, defining $\omega_{i}=l_{i}\frac{2\pi}{P}$, with $l_{i}\in\Z$, implies that input neurons $\textbf{s}_i\!:\!\R\!\to\! \R^{n}$ are periodic functions with period $P$:
\begin{align}
\displaystyle
    \textbf{s}(x)=
    \left(
    \begin{array}{c}
        \sin\big(l_{1}\frac{2\pi}{P}{x}+\varphi_1\big)\\
         {\footnotesize\vdots}\\
         \sin\left(l_{n}\frac{2\pi}{P}{x}+\varphi_{n}\right)
    \end{array}
    \right).
\end{align}

We now prove that the composition $\textbf{h}\big(\textbf{s}(x)\big)$ outputs a list of neurons $\textbf{h}_i$ that are periodic with period $P$. For this, we use Theorem~\ref{t-siren2sum} to rewrite $\textbf{h}_i$ as follows:
\begin{align*}
\textbf{h}_i(x)=\sum_{\textbf{k}\in\Z^n}\alpha_\textbf{k}(a)\sin\left(\frac{2\pi}{P}\dot{\textbf{k}}{\textbf{l}} x +\dot{\textbf{k}}{\varphi}+ b\right).
\end{align*}
Since $f(x)=\sum_i c_i \textbf{h}_i(x)+d$, the network $f$ is also periodic with period $P$, and we have proved the following result.
\begin{theorem}
\label{t-periodic}
    If the input neurons of a sinusoidal network $f$ has its weights expressed by $\omega_{i}=l_{i}\frac{2\pi}{P}$, with $l_i\in\Z$, then $f$ is periodic with period $P$.
\end{theorem}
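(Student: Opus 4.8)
The plan is to leverage Theorem~\ref{t-siren2sum} and reduce the periodicity of $f$ to the periodicity of each individual harmonic term appearing in the expansion of its hidden neurons $\textbf{h}_i$.

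First I would substitute the hypothesis $\omega_i=l_i\frac{2\pi}{P}$ into the harmonic expansion of an arbitrary hidden neuron $\textbf{h}_i$ supplied by Theorem~\ref{t-siren2sum}. Since the frequency of the $\textbf{k}$-th harmonic is $\dot{\textbf{k}}{\omega}=\frac{2\pi}{P}\dot{\textbf{k}}{\textbf{l}}$, this gives
\begin{align*}
\textbf{h}_i(x)=\sum_{\textbf{k}\in\Z^n}\alpha_\textbf{k}(a_i)\sin\left(\frac{2\pi}{P}\dot{\textbf{k}}{\textbf{l}}\, x+\dot{\textbf{k}}{\varphi}+b_i\right).
\end{align*}
Second, I would observe that $\dot{\textbf{k}}{\textbf{l}}=\sum_i k_i l_i\in\Z$ because $\textbf{k},\textbf{l}\in\Z^n$. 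Hence replacing $x$ by $x+P$ adds $2\pi\dot{\textbf{k}}{\textbf{l}}$ to the argument of every summand, which leaves each sine unchanged; therefore $\textbf{h}_i(x+P)=\textbf{h}_i(x)$ for every $i$. Finally, since $f(x)=\textbf{l}\circ\textbf{h}\circ\textbf{s}(x)+d=\sum_{i=1}^n c_i\textbf{h}_i(x)+d$ is a fixed affine combination of the $\textbf{h}_i$, it inherits the period $P$, which is exactly the claim.

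The only delicate point is that the argument is carried out termwise on an infinite trigonometric series; this is legitimate precisely because Theorem~\ref{t-siren2sum} asserts the expansion as a genuine identity of functions on $\R$, so there is no convergence issue to handle here. One could also bypass the expansion entirely — $\textbf{s}(x+P)=\textbf{s}(x)$ follows at once from $\sin\!\big(l_i\frac{2\pi}{P}(x+P)+\varphi_i\big)=\sin\!\big(l_i\frac{2\pi}{P}x+\varphi_i\big)$, and periodicity then propagates through the composition $\textbf{l}\circ\textbf{h}\circ\textbf{s}$ trivially — but routing the proof through Theorem~\ref{t-siren2sum} is preferable, since it simultaneously exhibits the Fourier coefficients of the resulting periodic network, which is the purpose of this section.
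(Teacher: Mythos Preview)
Your proof is correct and follows essentially the same route as the paper: substitute $\omega_i=l_i\frac{2\pi}{P}$ into the expansion of each hidden neuron $\textbf{h}_i$ provided by Theorem~\ref{t-siren2sum}, observe that every resulting frequency $\frac{2\pi}{P}\dot{\textbf{k}}{\textbf{l}}$ is an integer multiple of $\frac{2\pi}{P}$, and conclude that the affine combination $f(x)=\sum_i c_i\textbf{h}_i(x)+d$ is $P$-periodic. Your version is in fact slightly more explicit than the paper's, and your remark that one could bypass the expansion via $\textbf{s}(x+P)=\textbf{s}(x)$ is a valid shortcut the paper does not take.
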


We define a \textit{periodic sinusoidal network} as a sinusoidal network such that its input neurons are periodic. 
To provide an example, we train a periodic network with period $2$ to learn a \textit{square wave} with period $2$ and amplitude $0.5$ defined in the interval $[-1,1]$ as follows.
\begin{align*}
\gt{f}(x)=
\begin{cases}
    0.5 \quad &\text{if} \quad -1\leq x\leq 0\\
    -0.5 \quad &\text{if} \quad\quad 0\leq x\leq 1
\end{cases}
\end{align*}
We recall that its Fourier series is given by $\gt{f}(x)\!=\!-\!\sum_{i=1}^\infty \frac{2}{i\pi}\sin\big(i\pi x\big)$, with $i$ odd. 
Figure~\ref{f-pulse_train} (left) illustrates the sum of the first $92$ harmonics of this series. 

We assume the periodic sinusoidal network $f$ to have width $5$. To fit $f$ to $\gt{f}$, we initialize the weights of the first layer of the network with the five most important frequencies $\pi[1,3,5,7,9]$ of the Fourier series of $\gt{f}$.
The weights $\omega$ are kept fixed during training (not optimized).

Figure~\ref{f-pulse_train} (right) shows the graph of the trained network. Observe that the \textit{Gibbs phenomenon}
is minimized in the network approximation, which is not the case in the truncated Fourier series (left).
\begin{figure}[ht]
    % \centering
    \begin{center}

    \begin{tabular}{cc}
        {\small Truncated Fourier series} & {\small Periodic sinusoidal network} \\
        {\small with $92$ coefficients} & {\small with $46$ coefficients }\\
         \includegraphics[width=0.47\textwidth]{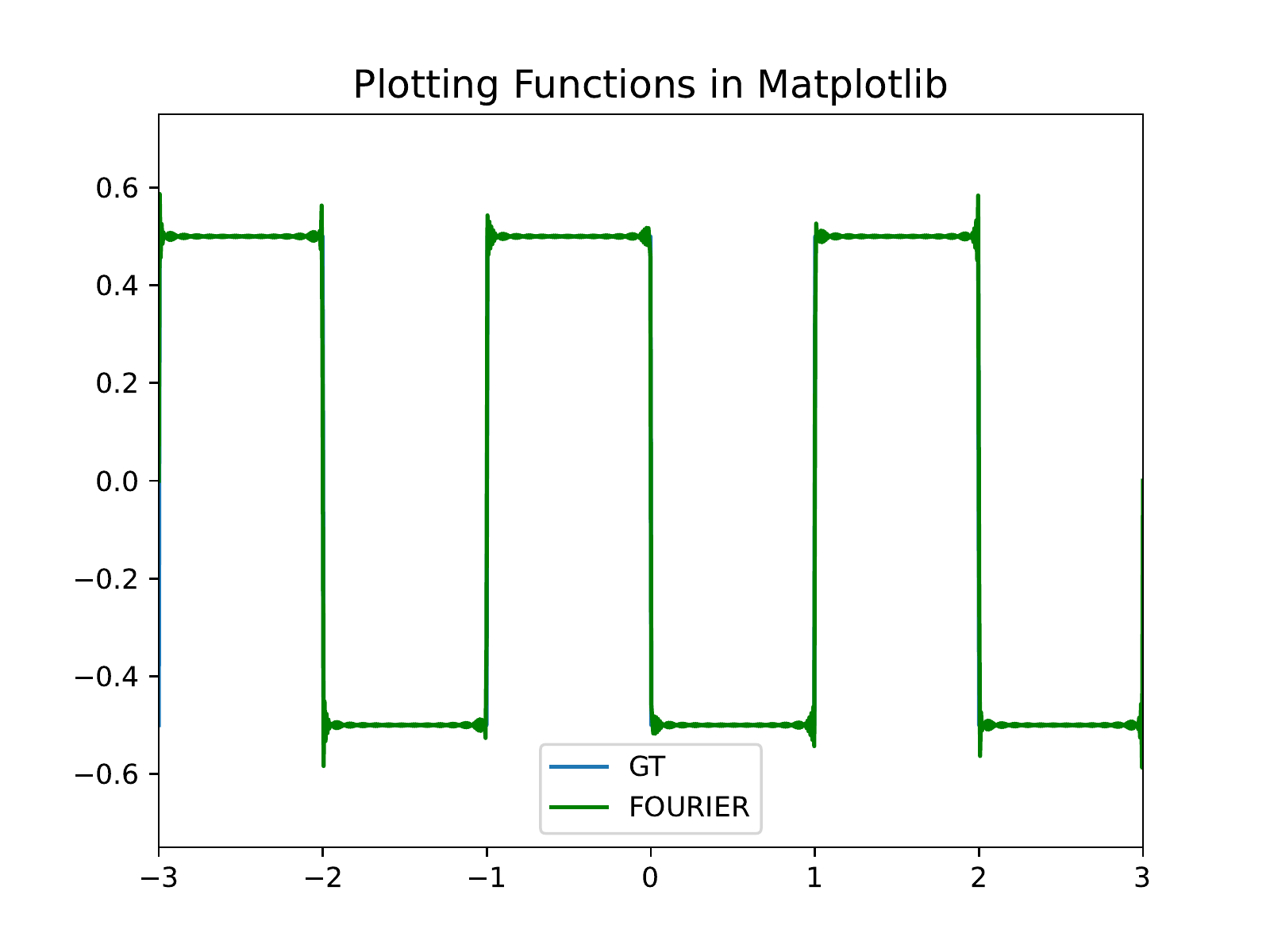}& \includegraphics[width=0.47\textwidth]{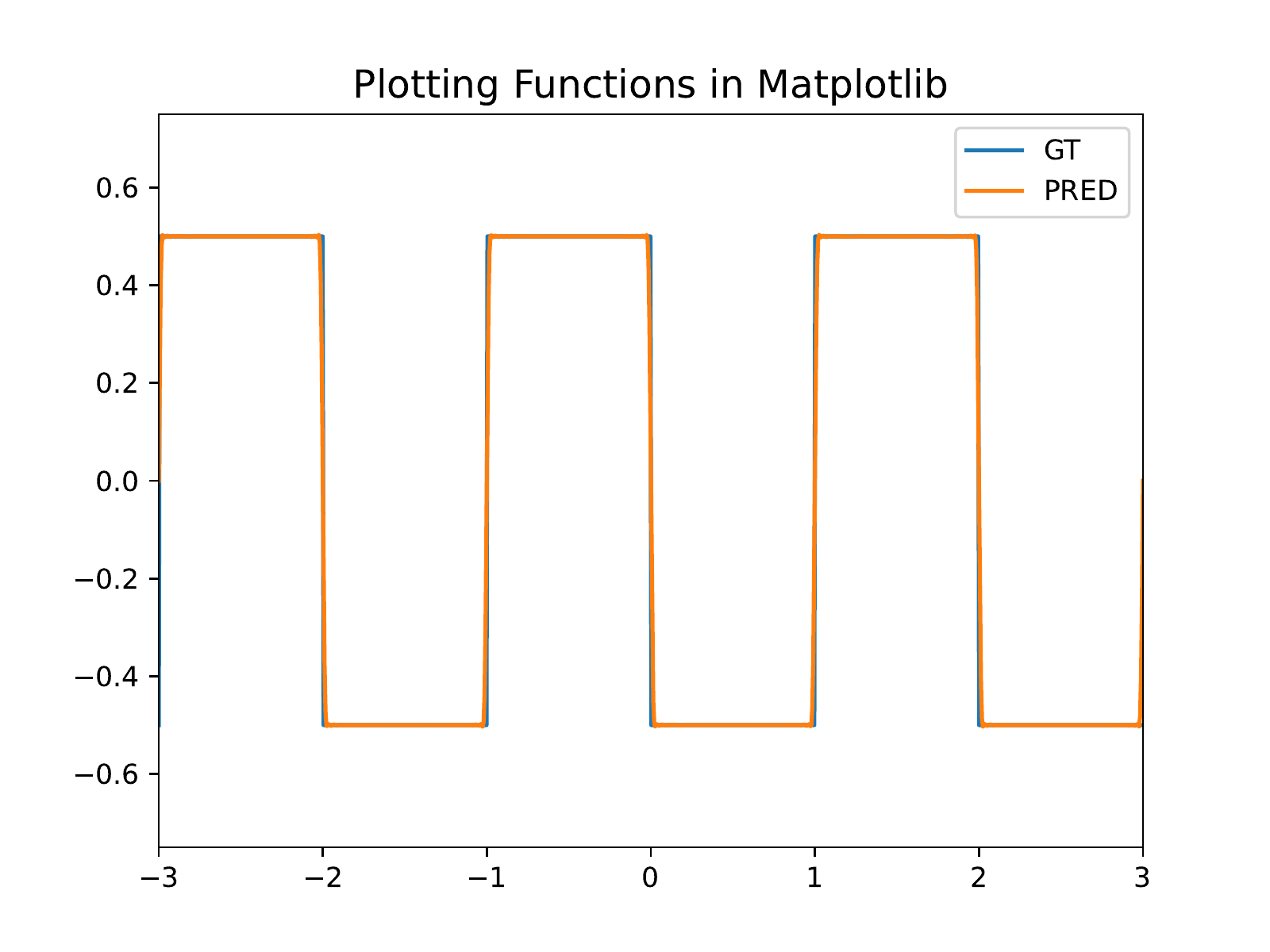}
    \end{tabular}

    \end{center}
    \caption{Approximations of the square wave. On its left, we have its truncated Fourier series consisting of the sum of its first $92$ terms. On the left, we have the graph of a periodic neural network $f$ with a width $5$ trained only in the period $[-1,1]$ of $\gt{f}$.}
    \label{f-pulse_train}
\end{figure}

There are two observations regarding the training of the periodic network~$f$:
\begin{itemize}
    \item First, we only have to consider the data inside a given period of the ground-truth function $\gt{f}$. In the example, we used only the values of $\gt{f}$ in the interval $[-1,1]$. Since $f$ has period $2$, the learned signal in $[-1,1]$ is translated to the real line, see Figure~\ref{f-pulse_train} (right);
    \item Second, the ground-truth $\gt{f}$ contains discontinuities at the integers. On the other hand, our sinusoidal network $f$ is a smooth function. We avoid possible representation inconsistencies by removing a small neighborhood of the discontinuities of $\gt{f}$. This left a space for $f$ to approximate $\gt{f}$ at such points smoothly.
\end{itemize}

% TODO:
% \hl{. limit the amplitudes of the higher frequencies;

% . identity the most important frequencies of the first layer;

% . provide a lower/upper bound for the number of neurons;}

% \subsection{Computing the derivative}
% \tnote{green}{In progress..}

% $$f'(x)\approx \sum_{e\in\{-m,\ldots,m\}^n}\alpha(e,a,b,c)\cdot\cos\Big(\beta(e,\omega)\cdot x+\lambda(e, \varphi)\Big)\cdot\beta(e,\omega)+d. $$

% . Explore it during training of PDE

% . Define its network?

% \section{Conclusion and future works}

%===================================================
% \bibliographystyle{amsplain}
\bibliographystyle{plainnat}
\bibliography{deep_implicits}
%===================================================

\end{document}